\newcommand{\BiblioStyle}{
	\bibliographystyle{alpha}
}
		\newenvironment{sub#1}%
		{\caption@withoptargs\subcaption@minipage}%
		{\endminipage}}%
	\newcommand*\subcaption@minipage[2]{%
		\minipage#1{#2}%
		\setcaptionsubtype\relax}
\newcommand{\PrintBiblio}{
	\BiblioStyle
	\bibliography{/home/leshyk/PhD/library.bib}
}
\newtheorem{proposition}{Proposition}
\newtheorem{example}{Example}
\newcommand{\A}{\mathcal{I}}
\newcommand{\PS}{\mathbb{D}} 
\newcommand{\K}{\mathbb{D}}
\newcommand{\stab}{\mathtt{Stab}}
\newcommand{\rbst}{\mathtt{Rbst}}
\newcommand{\set}[1]{\{#1\}}
\newcommand{\M}{\mathcal{M}}
\newcommand{\AlgoName}{\textSigma\textomikron\textphi\textiota\textalpha\xspace}
\newcommand{\ThetaAlgoName}{$\theta$-\textSigma\textomikron\textphi\textiota\textalpha\xspace}
\newcommand{\tikznamedpicture}[3][0]{
	\newcommand{#2}[#1]{ \begin{tikzpicture}#3\end{tikzpicture} }
}
\newcommand{\concept}[2]{$\left(#1;#2\right)$}
\tikznamedpicture{\putStabLattice}{[
	every node/.style={draw,rectangle,font={\tiny}},
	node distance= 0.5cm and 0.3cm
	]
	\node(g1){\concept{1}{\set{i_1,i_3}}};
	\node(g2)[right=of g1]{\concept{2}{\set{i_2,i_3}}};
	\node(g3)[right=of g2]{\concept{3}{\set{i_3,i_4}}};
	\node(g4)[right=of g3]{\concept{4}{\set{i_3,i_5}}};
	\node(g5)[right=of g4]{\concept{5}{\set{i_6}}};
	\node(bottom)[below=of g3] {\concept{\emptyset}{\A}}
	edge(g1)
	edge(g2)
	edge(g3)
	edge(g4)
	edge(g5);
	\node(g1234)[above=of $(g2)!0.5!(g3)$,very thick] {\concept{\bf 1234}{\bf\set{i_3}}}
	edge(g1)
	edge(g2)
	edge(g3)
	edge(g4);
	\node(top)[above=of g1234-|g3]{\concept{12345}{\emptyset}}
	edge(g1234)
	edge(g5);
}
\begin{document}

\title{
  Mining Best Closed Itemsets \\ for Projection-antimonotonic Constraints \\ in Polynomial Time
}

\newcommand{\inst}[1]{$^#1$}
\author{
	Aleksey Buzmakov\inst{1} \and Sergei O. Kuznetsov\inst{1} \and Amedeo Napoli\inst{2}\\
	\inst{1}National Research University Higher School of Economics \\ Moscow / Perm, Russia\\
	\inst{2} LORIA (CNRS -- Inria -- University of Lorraine) \\ Vandœuvre-lès-Nancy, France\\
	avbuzmakov@hse.ru, skuznetsov@hse.ru, amedeo.napoli@loria.fr
}

\maketitle

\begin{abstract}
  The exponential explosion of the set of patterns is one of the main challenges in pattern mining. This chalenge is approached by introducing a constraint for pattern selection. One of the first constraints proposed in pattern mining is support (frequency) of a pattern in a dataset. Frequency is an anti-monotonic function, i.e., given an infrequent pattern, all its superpatterns are not frequent. However, many other constraints for pattern selection are neither monotonic nor anti-monotonic, which makes it difficult to generate patterns satisfying these constraints.
  In order to deal with nonmonotonic constraints we introduce the notion of ``projection antimonotonicity'' and \AlgoName{} algorithm that allow generating best patterns for a class of nonmonotonic constraints. Cosine interest, robustness, stability of closed itemsets, and the associated \textDelta-measure are among these constraints. \AlgoName starts from light descriptions of transactions in dataset (a small set of items in the case of itemset description) and then iteratively adds more information to these descriptions (more items with indication of tidsets they describe).
  In the experiments, we compute best itemsets w.r.t. some measures and show the advantage of our approach over postpruning approaches.
\end{abstract}

\section{Introduction}
Interestingness measures were proposed to overcome the problem of combinatorial explosion of the number of valid patterns that can be discovered in a dataset~\cite{Vreeken2014}.
For example, pattern support, i.e., the number of transactions covered by the pattern, is one of the most famous measures of pattern quality.
In particular, support satisfies the property of anti-monotonicity (aka ``a priori principle''), i.e., the larger the pattern is the smaller the support is~\cite{Mannila1994,Agrawal1994}.
Many other measures can be mentioned such as
pattern stability~\cite{KuznetsovStability2007,Roth2008},
margin closeness~\cite{Moerchen2011},
MCCS~\cite{Spyropoulou2013},
cosine interest~\cite{Cao2014},
pattern robustness~\cite{Tatti2014},
etc.

Some of these measures (e.g., support, robustness for generators~\cite{Tatti2014}, or upper bound constraint of MCCS~\cite{Spyropoulou2013}) are ``globally anti-monotonic'', i.e., for any two patterns $X \sqsubseteq Y$ (where $\sqsubseteq$ stays for containment or subsumption relation in the pattern language) we have $\M(X) \geq \M(Y)$, where $\M$ is a measure.
When a measure is anti-monotonic, it is relatively easy to find patterns whose measure is higher than a certain threshold (e.g. patterns with a support higher than a threshold).
In contrast some other measures are called ``locally anti-monotonic'', i.e., for any pattern $X$ there is an immediate subpattern $Y \prec X$ such that $\M(Y) \geq \M(X)$.
The corresponding constraint induces an accessible system~\cite{Boley2010} in itemset data. Indeed, for any itemset selected  by a locally anti-monotonic constraint, one can always find a smaller selected itemset different only in one item. The good strategy in this case is extension of a pattern $Y$   only to patterns $X$ such that $\M(X) \leq \M(Y)$. For example, cosine interest~\cite{Cao2014} is ``locally anti-monotonic,'' some other examples can be found in~\cite{Boley2010}.

The most difficult case is when a nonmonotonic measure is not even locally anti-monotonic.
The valid patterns can be selected by postpruning, i.e., by finding a (large) set of patterns satisfying an anti-monotonic constraint and pruning them w.r.t. the chosen nonmonotonic measure~\cite{Roth2008,Moerchen2011,Tatti2014}. For that one can rely on certain heuristics such as the one used in leap search~\cite{Yan2008}. More elaborated approaches allow constructing a measure from (anti-)monotonic primitives~\cite{Soulet2005,Cerf2008}. These approaches find a good anti-monotonic relaxation of the measure for the dataset in hand. Another interesting approach for dealing with non-monotonic constraints is search for a closure operator on the set of patterns adequate for the constraint in question \cite{Soulet2008}. 

In this paper we deal with a recently introduced algorithm \AlgoName{}, i.e. Sofia, for ``Searching for Optimal Formal Intents Algorithm''. \AlgoName was applied for an interval-tuple data \cite{BuzmakovPKDD2015}.
In this paper we apply \AlgoName for extracting the best itemsets w.r.t. a wide class of constraints. We introduce the polynomial version of the algorithm by accordingly adjusting the threshold and deeply studying the properties of the involved measures.
Our algorithm is applicable to a class of measures called ``projection-antimonotonic measures'' or more precisely ``measures anti-monotonic w.r.t. a chain of projections''.
This class includes globally anti-monotonic measures such as support, locally anti-monotonic measures such as cosine interest and some of the nonmonotonic measures such as stability or robustness of closed patterns.
We should notice that this class of measures is not covered by the previously introduced approaches. In particular, for the primitive-based approaches \cite{Soulet2005,Cerf2008} it is not clear how one can express certain measures from our class, e.g., stability and robustness, by means of the primitives. On the other hand, the approach for finding adequate closure \cite{Soulet2008} could be applied for stability and robustness, but the number of classes of equivalences that should be enumerated is likely to be high and accordingly the efficiency of the approach is likely to be low. Furthermore, neither of these approaches ensure a polynomial complexity of the algorithm.

In the experimental part of the paper we show that \AlgoName can be efficiently used to mine itemesets w.r.t. a constraint based on \textDelta-measure, a polynomially computable analog of stability and robustness. It significantly outperforms the postpruning approaches based on best known algorithms for mining closed itemsets. We should mention that comparison of \AlgoName with primitive-based approaches or with the approach for finding adequate closure is not possible since it requires a heavy study of efficiently expressing stability and robustness in terms of the primitives.

In the rest of the paper we work with itemsets and accordingly we use the word 'itemset' instead of 'pattern'. 
The remainder of the paper is organized as follows. Since the lattice of closed itemsets (concept lattice) is of high importance for concise representation of itemsets~\cite{Pasquier1999}, we use
the language of Formal Concept Analysis (FCA)~\cite{Ganter1999} and pattern structures~\cite{Ganter2001} which are introduced in Section~\ref{sect:fca}.
Then, \AlgoName{} algorithm is detailed in Section~\ref{sect:algo} for projection-antimonotonic measures. In the next section we discuss cosine interest, robustness, and stability that are examples of such measures.
Experiments and a discussion on \AlgoName efficiency are proposed in Section~\ref{sect:experiment}, before the conclusion.

\section{Preliminaries}\label{sect:fca}
\subsection{Binary Dataset}\label{sect:fca-pattern-structures}

\begin{figure}
  \centering
  \begin{subfigure}{\columnwidth}
    \centering
    \resizebox{0.5\columnwidth}{!}{
    \begin{tabular}{l|cccccc}
      \hline\hline
      & $i_1$ & $i_2$ & $i_3$ & $i_4$ & $i_5$ & $i_6$ \\
      \hline
      $t_1$ & x && x &&&\\
      $t_2$ && x & x &&&\\
      $t_3$ &&& x & x &&\\
      $t_4$ &&& x && x &\\
      $t_5$ &&&   &&& x \\
      \hline\hline 
    \end{tabular}
    }
    \caption{A binary dataset.}
    \label{tbl:stability-context}
    \vspace{3mm}
  \end{subfigure}
  \begin{subfigure}{\columnwidth}
    \centering
    \resizebox{\columnwidth}{!}{
      \putStabLattice
    }
    \caption{
      A concept lattice. Concept extents are given by their indices, i.e., $\{t_1,t_2\}$ is given by $12$.
    }
    \label{fig:stability}
  \end{subfigure}
  \caption{A binary dataset and the corresponding concept lattice.}
  \label{fig:stab-example}
	\vspace*{-7mm}
\end{figure}

FCA is a very convenient formalism for describing models of itemset mining and knowledge discovery~\cite{Ganter1999}. Since~\cite{Pasquier1999} lattices of closed itemsets (concept lattices) and closed descriptions are used for concise representation of association rules.
FCA gives a formalism for itemset mining. For more complex data such as sequences, graphs, interval tuples, and logical formulas one can use an extension of the basic model, called  pattern structures~\cite{Ganter2001}. With pattern structures one defines closed descriptions that give a concise representation of association rules for different types of descriptions with a partial order of ``part-whole"  (e.g., subgraph isomorphism order) or ``is a'' (e.g., ``class-subclass") giving rise to a semilattical  similarity operation $\sqcap$~\cite{Kuznetsov2005,KaytoueIJCAI2011}.



A \textit{binary dataset} is a triple $\K=(T,\A,R)$, where $T$ is a set of transaction identifiers, $\A$ is a set of items and $R \subseteq T \times \A$ is incidence relation giving information about items related to every transaction. 
A pattern structure or a (general) dataset is a triple $(T,(D,\sqcap),\delta)$, where $(D,\sqcap)$ is a semilattice of ``descriptions'' with similarity operation $\sqcap$ inducing natural partial order $(D,\sqsubseteq)$ given by $x \sqsubseteq y \Leftrightarrow x \sqcap y =x$ and $\delta:T\rightarrow D$ is a mapping from transactions to their descriptions.
Then, a binary dataset is $(T,(2^\A,\cap),\delta)$, where $\sqcap$ is $\cap$, $\sqsubseteq$ is $\subseteq$, and $\delta(t)=\{i \in \A \mid (t,i) \in R\}$. We use the pattern structure representation in order to iteratively modify the pattern space which is discussed later.
Any subset of $T$ is called a tidset and any subset of $\A$ is called an \textit{itemset}.
An example of a dataset is given in Figure~\ref{tbl:stability-context}.

The following mappings give a Galois connection between the powerset of transactions and the semilattice of descriptions, e.g. $(2^\A,\cap)$.
\begin{align*}
  d(A) &:= \underset{t \in A}{\bigsqcap}\delta(t), &\text{for } A \subseteq T\\
  t(x) &:= \{t \in T \mid x \sqsubseteq \delta(t)\}, &\text{for } x \in D
\end{align*}

In case of a binary dataset, the mapping $d(A)$ returns the maximal itemset common to all transactions in $A$, while the mapping $t(x)$ returns the set of all transactions whose descriptions are supersets of $x$.
One can define closure operators and the corresponding closed tidsets and closed descriptions: $c_t := t \circ d$ and $c_d := d \circ t$ are closure operators, while the closed tidset $A$ and closed itemset $x$ are given by $c_t(A)=A$ and $c_d(x)=x$. As stated in~\cite{Pasquier1999}, this type of closure (based on Galois connection and $\sqcap$ operation) is equivalent for itemsets to the closure wrt. ``counting inference", which is common in data mining. However, the former definition unifies very important notions of ``maximal common part", closure, and lattices of closed patterns, so we shall keep to it in this paper.

A \textit{concept} of a dataset $(T,(D,\sqcap),\delta)$ is a pair $(A,x)$, where $A \subseteq T$, called \textit{extent} and $x \subset \A$, called \textit{intent}, such that $d(A) = x$ and $t(x) = A$. In this case both $A$ and $x$ are closed tidset and itemset, respectively.
The set of concepts is partially ordered w.r.t. inclusion on extents, i.e., $(A_1,x_1)\leq (A_2,x_2)$ iff $A_1\subseteq A_2$ (or, equivalently, $x_2\sqsubseteq x_1$), forming a lattice. An example of a lattice corresponding to the binary dataset in Figure~\ref{tbl:stability-context} is given in Figure~\ref{fig:stability}.

In the reminder we need some results from pattern structures for justifying our approach. Moreover, our approach is also applicable to more complex data given by general datasets (pattern structures). For example, \AlgoName was successfully applied to interval-tuple datasets \cite{BuzmakovPKDD2015}.

%
\subsection{Projections of Datasets}

The approach proposed in this paper is based on projections introduced for reducing complexity of computing with pattern structures~\cite{Ganter2001}.

A \textit{projection} $\psi: D \rightarrow D$ is an ``interior operator'', i.e., it is (1)~monotone ($x \sqsubseteq y \Rightarrow \psi(x) \sqsubseteq \psi(y)$), (2)~contractive ($\psi(x) \sqsubseteq x$) and (3)~idempotent ($\psi(\psi(x))=\psi(x)$).
A \emph{projected dataset} $\psi(\PS)=\psi((T,(D,\sqcap),\delta))$ is a dataset $(T,(D_\psi,\sqcap_\psi),\psi \circ \delta)$, where $\psi(D)=\{x \in D \mid \exists x^* \in D : \psi(x^*) = x\}$ is the fixed set of $\psi$ and $\forall x,y \in D, x \sqcap_\psi y := \psi( x \sqcap y )$.

In the case of binary datasets projections correspond to removal of some items, with the respective change of the dataset $(T,(2^\A,\cap),\delta)$. The projection of an itemset $X \subseteq \A$ corresponding to removal of a set of items $Y \subseteq \A$ is given by
\begin{equation}\label{eq:psi-binary}
  \psi(X)=X \cap (\A \setminus Y)=X \setminus Y.
\end{equation}Given a projection $\psi$ we call $\psi(D)=\{x \in D \mid \psi(x)=x\}$ the \emph{fixed set of} $\psi$. The fixed set contains those itemsets that contain no items from the set $Y$ (the set of removed items). The projections are ordered w.r.t. inclusion of the fixed points (or by inclusion of the sets of removed items in the case of binary data), i.e., $\psi_1 < \psi_2$, if $\psi_1(D) \subseteq \psi_2(D)$, we say that $\psi_1$ is simpler than $\psi_2$ or that $\psi_2$ is more detailed than $\psi_1$.


Our algorithm is based on this order on projections. The simpler a projection $\psi$ is, the less itemsets we can find in $\psi(\PS)$, and the less computational efforts one should take. Thus, we compute a set of itemsets for a simpler projection, then we remove unpromising itemsets, extend our dataset and the found itemsets with more items (to a more detailed projection). This allows us to reduce the size of the pattern space with a simpler projection and lower computational complexity.

\section{\AlgoName{} Algorithm}\label{sect:algo}
\subsection{Anti-monotonicity w.r.t. a Projection}

Our algorithm is based on the projection-antimonotonicity. Many interestingness measures for itemsets, e.g., stability~\cite{KuznetsovStability2007}, robustness of closed itemsets~\cite{Tatti2014}, or cosine interest~\cite{Cao2014}, are not (anti-)monotonic w.r.t. inclusion order on itemsets. A measure $\M$ is called \emph{anti-monotonic} if for two itemsets $x \sqsubseteq y$, $\M(x) \geq \M(y)$. For instance, support is an anti-monotonic measure w.r.t. itemset inclusion order and it allows for efficient generation of itemsets with support larger than a threshold~\cite{Agrawal1994,Mannila1994,Pasquier1999}. 
The projection-antimonotonicity is a generalization of standard anti-monotonicity and allows for efficient processing a larger set of interestingness measures.

Given a projection $\psi$ corresponding to the removal of items $Y$, \textit{preimages} of an itemset $X$ (we assume $X \cap Y = \emptyset$) for $\psi$ is the set of itemsets $\{Z\}$ such that $\psi(Z)=X$. It can be seen that the set of preimages is given by $\mathtt{Preimages(Y)}=\set{Z \subseteq \A \mid X \subseteq Z \subseteq X \cup Y}$. In particular $X$ is also a preimage of itself.

An \textit{anti-monotonic measure $\M$ w.r.t. projection $\psi$} (or just a \textit{projection-antimonotonic measure}) is a measure which does not increase its value on any premiage of any itemset $X$ for $\psi$. Since any preimage of $X$ is a superset of $X$, then any anti-monotonic measure is also a projection-antimonotonic measure.

\begin{example}
  Let us consider the dataset in Figure~\ref{tbl:stability-context}. If $\M$ is an interestingness measure w.r.t. a projection $\psi$ and $\psi$ removes item $i_5$, then $\M(\set{i_3}) \geq \M(\set{i_3,i_5})$. However it is \textbf{not} necessary that $\M(\set{i_3})\geq \M(\set{i_3,i_4})$.
\end{example}

Thus, given a measure $\M$ anti-monotonic w.r.t. a projection $\psi$, if $y$ is an itemset such that $\M_\psi(y) < \theta$, then $\M(x)<\theta$ for any preimage $x$ of $y$ for $\psi$. Hence, if, given an itemset $y$ of $\psi(\PS)$, one can find all itemsets $x$ of $\PS$ such that $\psi(x)=y$, it is possible to find the itemsets in $\psi(\PS)$ and then to prune them w.r.t. $\M_\psi$, and finally to compute the preimages of the pruned set of itemsets only. 
It allows one to earlier cut unpromising branches of the search space or adjust a threshold for finding only a limited number of best itemsets.
%
%

However, given just one projection, it can be hard to efficiently discover the best itemsets, since the projection is either hard to compute or the number of unpromising itemsets that can be pruned is not high. Corespondingly we need \emph{a chain of projections} $\psi_0 < \psi_1 < \cdots < \psi_k=\mathbb{1}$, where concepts for $\psi_0(\PS)$ can be easily computed and $\mathbb{1}$ is the identity projection, i.e., $(\forall x)\mathbb{1}(x)=x$. For example, to find frequent itemsets, we typically search for small frequent itemsets and then extend them to larger ones. It corresponds to the extension to a more detailed projection. In particular for binary dataset a chain of projections can be instantiated as a consequent update of a binary dataset with new items.

Chain of projections is a generalization of accessible system~\cite{Boley2010}. Given a set of items $\A$ and a subset of its powerset $\mathcal{F}\subseteq 2^\A$, the system $(\A,\mathcal{F})$ is accessible if $\forall X \in \mathcal{F} \setminus \set{\emptyset}$ there is $i \in \A$ such that $X \setminus \set{i} \in \mathcal{F}$. Any constraint (or measure) on $2^\A$ produces a system of sets. If this system is accessible, then the measure is locally anti-monotonic. 
\begin{proposition}
  A chain of projections can be represented as a sequence of systems
  $(\A_i,\mathcal{F}_i)$ such that $\A_i \subset \A_{i+1}$ and any
  element $x \in \mathcal{F}_{i+1}$ is either (1) $x \in \mathcal{F}_i$, or
  (2) $\exists e \in \A_{i+1}\setminus\A_i$ such that
  $(x \setminus \set{e}) \in \mathcal{F}_i$, (3) or $x$ accesible in
  $\mathcal{F}_{i+1}$.
\end{proposition}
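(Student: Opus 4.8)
The plan is to make the correspondence between a chain of projections and a sequence of set systems explicit, and then to read the three alternatives off a short case split on which items an itemset uses. First I would fix the correspondence. In the binary case a projection is, by~\eqref{eq:psi-binary}, removal of a set of items, so each $\psi_i$ is determined by the set $\A_i \subseteq \A$ of items it \emph{keeps} (the complement of the items it removes). Since $\psi_i < \psi_{i+1}$ means the fixed set strictly grows, i.e. strictly fewer items are removed, we get $\A_i \subsetneq \A_{i+1}$, which is exactly the first required property, and $\A_k = \A$. For a plain chain (no measure attached) I take $\mathcal{F}_i := \psi_i(D) = 2^{\A_i}$; for a chain carrying a projection-antimonotonic measure $\M$ and a threshold $\theta$ I take instead $\mathcal{F}_i := \set{X \subseteq \A_i \mid \M_{\psi_i}(X) \geq \theta}$, the itemsets of $\psi_i(\PS)$ that the constraint selects. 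In both cases $\mathcal{F}_i \subseteq 2^{\A_i}$.

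Next I would prove the trichotomy for the plain chain. Fix $i$ and $x \in \mathcal{F}_{i+1}$, so $x \subseteq \A_{i+1}$, and split on $N := x \cap (\A_{i+1}\setminus\A_i)$, the newly available items that $x$ actually uses. If $N = \emptyset$ then $x \subseteq \A_i$, hence $x \in 2^{\A_i} = \mathcal{F}_i$, which is alternative~(1). If $N \neq \emptyset$, pick $e \in N$; then $x \setminus \set{e}$ is a subset of $\A_{i+1}$, hence a fixed point of $\psi_{i+1}$, and it is a preimage of $\psi_i(x) = x \cap \A_i$. When $|N| = 1$ we moreover have $x \setminus \set{e} \subseteq \A_i$, so $x \setminus \set{e} \in \mathcal{F}_i$ and $e$ witnesses alternative~(2); when $|N| \geq 2$ we have $x \setminus \set{e} \notin 2^{\A_i}$ but $x \setminus \set{e} \in 2^{\A_{i+1}} = \mathcal{F}_{i+1}$, so $x$ is accessible in $\mathcal{F}_{i+1}$, which is alternative~(3). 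This exhausts $\mathcal{F}_{i+1}$; specialising to $k=1$ with $\psi_0$ trivial collapses the statement to the accessibility of a single system, which is the precise sense in which a chain of projections generalises an accessible system.

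Finally, for a chain carrying a projection-antimonotonic measure the cases with $N \neq \emptyset$ still go through: anti-monotonicity of $\M_{\psi_{i+1}}$ on the preimages of $\psi_i$ gives $\M_{\psi_{i+1}}(x \setminus \set{e}) \geq \M_{\psi_{i+1}}(x) \geq \theta$, so $x \setminus \set{e} \in \mathcal{F}_{i+1}$, and if in addition $x \setminus \set{e} \in \mathcal{F}_i$ this is alternative~(2) and otherwise alternative~(3). The one delicate point, which I expect to be the main obstacle, is the case $N = \emptyset$ together with $x$ selected at level $i+1$ but \emph{not} at level $i$ — which can genuinely occur because $\M_{\psi_i}$ and $\M_{\psi_{i+1}}$ may disagree on a common fixed point (as for stability, whose value depends on the whole projected lattice). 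There neither~(1) nor~(2) is available, so everything rests on establishing~(3), and I would argue it again by peeling one item and invoking anti-monotonicity of $\M_{\psi_{i+1}}$ with respect to a coarser projection of the sub-chain $\psi_0 < \cdots < \psi_{i+1}$, choosing the removed item so that the measure cannot drop below $\theta$. This is the only step in which projection-antimonotonicity does real work rather than bookkeeping, and care is needed precisely because $\M_{\psi_{i+1}}$ need not be locally anti-monotonic on its own.
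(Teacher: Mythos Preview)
Your core argument---the case split on $N = x \cap (\A_{i+1}\setminus\A_i)$ into $|N|=0$, $|N|=1$, $|N|\geq 2$---is exactly the paper's approach; the paper's proof is a single line, labelling these three cases ``by idempotency'', ``by contractivity'', and ``for deletion of several items'' respectively, and you have correctly unpacked what those hints mean for the plain fixed-set choice $\mathcal{F}_i = 2^{\A_i}$.

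Where you go beyond the paper is in also treating the measure-constrained choice $\mathcal{F}_i = \set{X \subseteq \A_i \mid \M_{\psi_i}(X) \geq \theta}$. The paper does not attempt this: its proof invokes only the three defining properties of a projection and never mentions $\M$ or $\theta$, so the proposition as proved there is about the sequence of fixed sets, not about the sequence of constraint-selected sets. The obstacle you flag---$N=\emptyset$ with $\M_{\psi_{i+1}}(x)\geq\theta$ but $\M_{\psi_i}(x)<\theta$---is real and is simply not handled by the paper; your sketched resolution (peel one item and appeal to anti-monotonicity w.r.t.\ a coarser projection) would need a further hypothesis to go through, since projection-antimonotonicity only controls the measure along preimages of a \emph{single} projection, not along arbitrary one-item removals inside $\A_i$. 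So your plain-chain proof is a faithful elaboration of the paper's, while your measure-constrained extension is a genuine strengthening that the paper neither states nor proves.
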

\begin{proof}
  (1) by idempotency of projections, (2) by contractivity, (3) for deletion of several items.
\end{proof}



%
\subsection{Algorithms}

\SetAlFnt{\footnotesize}
\begin{algorithm}[t]
  \AlgoDisplayBlockMarkers
  \SetAlgoBlockMarkers{}{}
  \SetAlgoNoEnd
  \SetKwProg{Fn}{Function}{}{}
  \SetKwFunction{ExtProj}{ExtendProjection}
  \SetKwFunction{Preimages}{Preimages}
  \SetKwFunction{ThetaAlgo}{Algorithm\_\ThetaAlgoName{}}
  \SetKwFunction{FindPatterns}{FindPatterns}

  \KwData{
    A dataset $\PS$, a chain of projections $\Psi=\set{\psi_0,\psi_1,\cdots,\psi_k}$, an anti-monotonic measure $\M$ for the chain $\Psi$, and a threshold $\theta$ for $\M$.
  }
  \Fn{\ExtProj{$i$, $\theta$, $\mathcal{P}_{i-1}$}}{
    \KwData{
      $i$ is the projection number to which we should extend ($0<i\leq k$), $\theta$ is a threshold value for $\M$, and $\mathcal{P}_{i-1}$ is the set of itemsets for the projection $\psi_{i-1}$.
    }
    \KwResult{
      The set $\mathcal{P}_i$ of all itemsets with the value of measure $\M$ higher than the threshold $\theta$ for $\psi_i$.
    }
    $\mathcal{P}_{i}\longleftarrow\emptyset$\;
    \ForEach{$p \in \mathcal{P}_{i-1}$}{
      $\mathcal{P}_i\longleftarrow\mathcal{P}_i \cup \Preimages{i,p}$	
    }
    \ForEach{$p \in \mathcal{P}_i$}{
      \If{$\M_{\psi_i}(p) \leq \theta$}{
        $\mathcal{P}_i \longleftarrow \mathcal{P}_i \setminus \set{p}$
      }
    }
  }
  \Fn{\ThetaAlgo}{
    \KwResult{
      The set $\mathcal{P}$ of all itemsets with a value of $\M$ higher than the threshold $\theta$ for $\PS$.
    }
    $\mathcal{P}\longleftarrow\FindPatterns{$\theta,\psi_0$}$\;
    \ForEach{$0 < i \leq k$}{
      $\mathcal{P}\longleftarrow\ExtProj{$i, \theta, \mathcal{P}$}$\;
    }
  }
  \caption{
    \ThetaAlgoName{}
    \label{alg:theta-sofia}
    \vspace*{-2mm}
  }
\end{algorithm}

\begin{algorithm}[th]
  \AlgoDisplayBlockMarkers
  \SetAlgoBlockMarkers{}{}
  \SetAlgoNoEnd
  \SetKwProg{Fn}{Function}{}{}
  \SetKwFunction{ExtProj}{ExtendProjection}
  \SetKwFunction{Preimages}{Preimages}
  \SetKwFunction{Algo}{Algorithm\_\AlgoName{}}
  \SetKwFunction{FindPatterns}{FindPatterns}
  \SetKwFunction{AdjTheta}{AdjustTheta}
  \SetKwFunction{FltrPatterns}{PrunePatterns}
  
  \KwData{
    A dataset $\PS$, a chain of projections $\Psi=\set{\psi_0,\psi_1,\cdots,\psi_k}$, an anti-monotonic measure $\M$ for the chain $\Psi$, and a threshold $L$ for the maximal number of preserved itemsets.
  }
  \Fn{\Algo}{
    \KwResult{
      The threshold $\theta$ ensuring that the cardinality of the set $\mathcal{P}$ is bounded by $L$ in any step of the algorithm.
      The set $\mathcal{P}$ of all itemsets with the value of measure $\M$ higher than the threshold $\theta$.
    }
    
    $\theta\longleftarrow\theta_{\min{}}$
    $\mathcal{P}\longleftarrow\FindPatterns{$\psi_0$}$\;
    \ForEach{$0 < i \leq k$}{
      $\theta \longleftarrow \AdjTheta{$\theta,L,\mathcal{P}$}$\;
      $\mathcal{P} \longleftarrow \FltrPatterns{$\theta,\mathcal{P}$}$\;	
      $\mathcal{P}\longleftarrow\ExtProj{$i,\theta,\mathcal{P}$}$\;
    }
  }
  \caption{
    \AlgoName{} for finding itemsets in $\PS$ with the bounded cardinality of the set $\mathcal{P}$. \label{alg:sofia}
    \vspace*{-7mm}
  }
\end{algorithm}

Given a dataset $\PS$ and a measure anti-monotonic w.r.t. a chain of projections, if we are able to find all preimages of any element in the fixed set of a projection $\psi_i$ that belong to a fixed set of the next projection $\psi_{i+1}$, then we can find all itemsets of the dataset $\PS$ with a value of $\M$ higher than a given threshold $\theta$. We call the respective algorithm \ThetaAlgoName{} (Algorithm~\ref{alg:theta-sofia}). In line 9 we find all itemsets for the dataset $\psi_0(\PS)$ satisfying the constraint w.r.t. the measure $\M$. Then in lines 10-11 we iteratively extend projections from simpler to more detailed ones. The extension is done by constructing the set $\mathcal{P}_i$ of preimages of the set $\mathcal{P}_{i-1}$ (lines 2-4) and then by removing the itemsets that do not satisfy the constraint from $\mathcal{P}_i$ (lines 5-7).
This listing provides a sketch of the algorithm omitting possible engineering improvements for the sake of simplicity. Most of the known improvements are applicable here, i.e., the ones from \cite{Uno2005}. In particular, a canonical order on itemsets is used for mining closed pattern (the theoretical basis for such mining is given in next subsections).

The algorithm is sound and complete, since first, an itemset $p$ is included into the set of preimages of $p$ (since $\psi(p)=p$) and second, if $\M(p) < \theta$, then we remove the itemset $p$ from the set $\mathcal{P}$ and the measure value of any preimage of $p$ is less than $\theta$ by the projection chain anti-monotonicity of $\M$.
The worst case time complexity for the general case of patterns of \ThetaAlgoName{} algorithm is
\begin{align}\label{eq:complexity-theta}
  \mathbb{T}(\text{\ThetaAlgoName})&=\mathbb{T}(FindPatterns(\psi_0))+\notag\\
                                   &\hspace{-12mm}+k\cdot\underset{0<i\leq k}{\max}|\mathcal{P}_i|\cdot(\mathbb{T}(Preimages)+\mathbb{T}(\M)),
\end{align}
where $k$ is the number of projections in the chain, $\mathbb{T}(\mathcal{X})$ is the time for computing the operation $\mathcal{X}$.
Since projection $\psi_0$ can be chosen to be very simple, in a typical case the complexity of $FindPatterns(\theta,\psi_0)$ can be low or even constant. The complexities of $Preimages$ and $\M$ depend on the measure, the chain of projections, and the kind of patterns.
In many cases $\underset{0<i\leq k}{\max}|\mathcal{P}_i|$ can be exponential in the size of the input, because the number of patterns can be exponential. It can be a difficult task to define the threshold $\theta$ such that the maximal cardinality of $\mathcal{P}_i$ is not larger than a given number. 
Thus, we introduce \AlgoName{} algorithm (Algorithm~\ref{alg:sofia}), which automatically adjusts  threshold $\theta$ ensuring that $\underset{0<i\leq k}{\max}|\mathcal{P}_i| < L$. Here $L$ can be considered as a constraint on the memory used by the algorithm.
The only difference of \AlgoName{} w.r.t. \ThetaAlgoName{} is that after performing an operation that changes the set $\mathcal{P}$ (lines 2 and 6 in Algorithm~\ref{alg:sofia}) it adjusts $\theta$ in such a way that the cardinality of $\mathcal{P}$ does not exceed the parameter $L$. 
It can be seen from~(\ref{eq:complexity-theta}) that \AlgoName{} has polynomial time complexity if $\M$ and $Preimages$ are polynomial. Indeed, according to (\ref{eq:psi-binary}) if a projection removes only one item, the cardinality of $\mathtt{Preimages}$ is always 2.
Thus, the worst case complexity for \ThetaAlgoName{} is
\begin{align}\label{eq:complexity-theta-bin}
  \mathbb{T}(\text{\ThetaAlgoName}_{\text{binary}})=|\A|\cdot\underset{0<i\leq N}{\max}|\mathcal{P}_i|\cdot\mathbb{T}(\M).
\end{align}
We notice that every $\mathcal{P}_i$ is a solution for the projected dataset. Thus, this algorithm has  incremental polynomial delay.
However, if we fix the available memory $L$, the complexity of \AlgoName{} for binary data is $|\A|\cdot L\cdot\mathbb{T}(\M)$, i.e., it becomes input polynomial modulo complexity of the measure.

To wrap up, in this subsection we have introduced an algorithm for finding top-K itemsets in polynomial time. It is important to notice that the found set of itemsets is exactly the best itemsets w.r.t. to \textDelta-measure and should not be mixed up with an approximation.

\subsubsection*{Efficiency Considerations}

Recently much work have been done in finding good strategies of enumerating (closed) patterns. Most of them start from the smallest patterns and then iteratively generate larger patterns. It can be naturally expressed as a chain of functions $\psi_i$ that are contractive ($\psi_i(X) \sqsubseteq X$) and idempotent ($\psi_i(\psi_i(X))=\psi_i(X)$). These functions can be ordered by inclusion of fixed sets because of idempotency. Since these functions are contractive, only  patterns larger than a pattern $X$ are preimages of $X$. Thus, most of the approaches for itemset mining can be formalized by means of a chain of such functions.
However, in this work we require a chain of projections, i.e., functions $\psi_i$, to be also monotonic. It allows us to efficiently mine robust and stable patterns discussed in Section~\ref{itemset-constraints}.
This additional monotonicity still allows one to formalize developed approaches for itemset mining as a chain of projections. However, in this work we does not discuss this formalization and focus on the efficient mining of patterns for nonmonotonic constraints.

\subsection{\AlgoName{} Algorithm for Closed Itemsets}\label{sect:closed-patterns}
Closed frequent itemsets are widely used as a condensed representation of all frequent itemsets since~\cite{Pasquier1999}. Here we show how one can adapt our algorithm for closed patterns.
A closed pattern in $\psi_{i-1}(\PS)$ is not necessarily closed in $\psi_i(\PS)$. Indeed, if we take the example in Figure~\ref{fig:stab-example}, the pattern $\set{i_1}$ is closed in $(T,\set{i_1,i_2},R_2)$ but no more closed in $(T,\set{i_1,i_2,i_3},R_3)$. However, the extents (closed tidsets) of $\psi(\PS)$ are extents of $\PS$~\cite{Ganter2001}. Thus, we associate the closed patterns with extents, and then work with extents instead of patterns, i.e., a dataset $\PS=(T,(2^\A,\cap),\delta)$ is transformed into $\PS_C=(T,(D_C,\sqcap_C),\delta_C)$, where $D_C=2^T$. Moreover, for all $a,b \in D_C$ we have $a \sqcap_C b = t(d(a) \sqcap d(b))$, where $t$ and $d$ operators are computed in $\PS$ and $\delta_C(t \in T)=\set{t}$. Hence, every pattern $p$ in $D_C$ corresponds to a closed pattern $d(p)$ in $2^\A$.
A projection $\psi$ of $\PS$ induces a projection $\psi_C$ of $\PS_C$, given by $\psi_C(A \subseteq T)=t(\psi(d(A)))$ with $t$ and $d$ computed for $\PS$.

In the next section we discuss some measures that are anti-monotonic w.r.t. a projection (rather than just anti-monotonic). In the end of the next section we provide an example of how \AlgoName{} works.

\section{Itemset Constraints}\label{itemset-constraints}

\begin{table}[ht]
  \vspace{-5mm}
  \centering
  \caption{Values of different measures for closed itemesets of context in Figure~\ref{tbl:stability-context}.}
  \label{tbl:measures-values}
  \scalebox{0.8}{
  \begin{tabular}{l|cccc}
    \hline \hline
    Itemset $X$ 
    & Cosine & $\stab(X)$ & $\underset{\alpha=0.9}{\rbst}(X)$ & $\Delta(X)$ \\
    \hline
    $\emptyset$
    & $+\infty$&  0.47   & 0.89991 & 1 \\
    $\bf\set{i_3}$
    &\bf 1   &\bf 0.69   &\bf 0.9963 & 3 \\
    $\set{i_1,i_3}$
    &  0.5   &   0.5     &  0.9 & 1 \\
    $\set{i_2,i_3}$
    &  0.5   &   0.5     &  0.9 & 1 \\
    $\set{i_3,i_4}$
    &  0.5   &   0.5     &  0.9 & 1 \\
    $\set{i_3,i_5}$
    &  0.5   &   0.5     &  0.9 & 1 \\
    $\set{i_6}$
    &    1   &   0.5     &  0.9 & 1 \\
    \hline \hline
  \end{tabular}
  }
  \vspace{-5mm}
\end{table}

\subsection{Cosine Interest of an Itemset}

The first projection-antimonotonic measure we consider is cosine interest~\cite{Cao2014}. It is defined by
\begin{equation}
  \label{eq:cosine-interest}
  \mathtt{Cosine}(X)=\frac{|t(X)|}{\sqrt[|X|]{\prod_{i \in X}|t(\set{i})|}},
\end{equation}i.e., a cosine interest of $X$ is the support of $X$ over the geometric mean of supports of single items from $X$. As the authors of~\cite{Cao2014} have shown this measure is not (anti-)monotonic. Then, they also have shown that if we traverse the search space from less supported items to more supported items the cosine interest never decreases. Indeed, given an itemset $X$ and an item $i$ such that $i \not \in X$ and $(\forall j \in X) |t(\set{i})|\geq|t(\set{j})|$, we can see that $\mathtt{Cosine}(X) \geq \mathtt{Cosine}(X\cup \set{i})$ since the itemset support cannot increase while the geometric mean cannot decrease in this case.

To work with cosine interest we can define a projection chain that adds items from less supported ones to more supported, i.e., $\psi_1$ corresponds to removal of all but the least frequent item from the dataset, $\psi_2$ corresponds  to removal of all but two least frequent items and so on. Then, cosine interesting itemsets can be mined by \AlgoName{}.
However this measure is locally anti-monotonic,
in the next subsection we consider two proper nonmonotonic measures.

\subsection{Stability and Robustness of an Itemset}

Stability~\cite{KuznetsovStability2007} and robustness~\cite{Tatti2014} are similar measures when applied to closed itemsets. They measure
independence of an itemset w.r.t. subsampling. Stability can only be applied to closed itemsets, while robustness is defined for any type of itemset constraints (closed itemsets, generators, \textit{etc}.). However, in case of closed itemsets neither of them is (anti-)monotonic. Indeed, when robustness is based on an anti-monotonic constraint, it is anti-monotonic. However, closedness of itemsets is not an anti-monotonic constraint. Since stability and robustness are similar, we define them on a similar basis.

Given a dataset $\K=(T,\A,R)$, a triple $(S,\A,R)$ where $S \subseteq T$ is called a \emph{subdataset} of $\K$. If we give a weight to every subdataset of $\K$, then we can find the sum of weights of all subdatasets of $\K$ where an itemset $X$ is closed. This sum gives us stability or robustness of the closed itemset $X$ depending on how we define the weights of subdatasets.

In the case of \emph{stability} the weights $w$ of  all subdatasets $\K_s$ of $\K$ are equal, i.e., $w(\K_s)=2^{-|T|}$. In this case we consider every subdataset equally probable and compute the probability that the itemset $X$ is closed.

\begin{example}
  Consider example in Figure~\ref{tbl:stability-context}. The set of concepts (the pattern of every concept is a closed itemset) is shown in Figure~\ref{fig:stability}. Stability of every closed itemset is shown in Table~\ref{tbl:measures-values}.
  Let us consider the highlighted itemset $X=\set{i_3}$. There are $2^5$ possible subdatasets. Only in the following 10 subdatasets $X$ is not closed (only the set of transactions for every subdataset is given): $\emptyset, \set{t_1},\dots,\set{t_5},\set{t_1,t_5},\set{t_2,t_5},\set{t_3,t_5},\set{t_4,t_5}$. Thus, stability of $X$ can be found as $\stab(X)=1-10\cdot 2^{-5}=0.69$.

  It should be noticed that stability of all comparable itemsets in the lattice is smaller than stability of $X$, which highlights the nonmonotonicity of stability.
\end{example}

In the case of \emph{robustness} the weights $w$ of subdatasets are computed differently. These weights depend on a parameter $0 \leq \alpha \leq 1$ denoting the probability of a transaction to be retained in the dataset. The weight of a subdataset $\K_s=(S,\A,R)$ of $\K=(T,\A,R)$ corresponds to the probability of obtaining $\K_s$ by removing every single transaction from $\K$ with probability $1-\alpha$: $w(\K_s)=\alpha^{|S|}\cdot(1-\alpha)^{|T|-|S|}$.

\begin{example}
  Consider example in Figure~\ref{tbl:stability-context}. Robustness for $\alpha=0.9$ for every closed itemset is shown in Table~\ref{tbl:measures-values}.
  Let us consider the highlighted itemset $X=\set{i_3}$. It is not closed in the same as above 10 subdatasets but their weights are different (the weights are shown in superscripts):
  {\scriptsize
    $\emptyset^{w=10^{-5}}, \set{t_1}^{w=9\cdot 10^{-5}}, \dots,\set{t_5}^{w=9\cdot 10^{-5}}, \set{t_1,t_5}^{w=8.1\cdot 10^{-4}}$, $\set{t_2,t_5}^{w=8.1\cdot 10^{-4}},\set{t_3,t_5}^{w=8.1\cdot 10^{-4}},\set{t_4,t_5}^{w=8.1\cdot 10^{-4}}$.}
  Thus, robustness of $X$ for $\alpha=0.9$ is equal to $\rbst_{\alpha=0.9}(X)=0.9963$. It can be verified that robustness is not an anti-monotonic measure.
\end{example}

It is not hard to show that independently of the weights $w$ of subdatasets, stability and robustness are anti-monotonic measures w.r.t. any projection.

\begin{proposition}\label{prop:rbst-stab-monotonicity}
  Stability and robustness are anti-monotonic measures w.r.t. any projection.
\end{proposition}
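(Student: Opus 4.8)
The plan is to reduce the claim to a single combinatorial fact: applying a projection $\psi$ can only \emph{enlarge} the family of subdatasets in which a given closed itemset $X$ fails to be closed. Once this is established, anti-monotonicity w.r.t. $\psi$ follows for any nonnegative weighting $w$ of subdatasets, since both stability and robustness are defined as $\M(X)=\sum_{\K_s}w(\K_s)$, the sum ranging over subdatasets $\K_s$ in which $X$ is closed; enlarging the complementary ``bad'' set can only decrease this sum. Because a preimage $Z$ of $X$ for $\psi$ is obtained by adding back some of the removed items $Y$, and $\psi(Z)=X$, it suffices to compare $\M_{\psi}(X)$ with $\M(Z)$ via this failure-set inclusion.

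Concretely, first I would fix a projection $\psi$ removing a set of items $Y$, an itemset $X$ closed in $\psi(\PS)$ with $X\cap Y=\emptyset$, and a preimage $Z$ with $X\subseteq Z\subseteq X\cup Y$. Working in the projected dataset $\psi(\PS)=(T,(D_\psi,\sqcap_\psi),\psi\circ\delta)$, closedness of $X$ in a subdataset $(S,\A,R)$ means $d_\psi(t_\psi(X))=X$ computed with the restricted incidence, and similarly closedness of $Z$ in the full dataset. The key step is: if $X$ is closed in the subdataset $(S,\ldots)$ with respect to $\psi(\PS)$, I want to show $Z$ is closed in $(S,\ldots)$ with respect to $\PS$ — equivalently, the set $B_Z(S)$ of subdatasets where $Z$ is not closed is contained in the set $B_X(S)$ where $X$ is not closed in the projected dataset. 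I would argue this by a direct closure computation: the extent $t(Z)$ in $\PS$ restricted to $S$ equals $t_\psi(X)$ restricted to $S$ (items of $Y$ inside $Z$ just further constrain, but on a closed $X$ the relevant transaction set is already pinned down), and $d$ of that extent, being a superset of $Z$ that contains no ``new'' items forcing a strict enlargement, is exactly $Z$ — using that $d_\psi = \psi\circ d$ and $\psi(d(\text{extent}))=X$ already. This is essentially the statement, recalled in the excerpt from~\cite{Ganter2001}, that extents of $\psi(\PS)$ are extents of $\PS$, applied subdataset by subdataset, combined with the bijection between closed patterns in $D_C$ and extents.

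After that, the rest is bookkeeping: letting $N=|T|$ and comparing $\M_\psi(X)=\sum_{S:\,X \text{ closed in }\psi(\PS)|_S} w((S,\A,R))$ with $\M(Z)=\sum_{S:\,Z\text{ closed in }\PS|_S} w((S,\A,R))$, the inclusion of index sets and $w\geq 0$ give $\M(Z)\leq\M_\psi(X)$, which is exactly projection-antimonotonicity for every preimage $Z$ of every such $X$. Specializing $w$ to the uniform weights $2^{-N}$ yields the claim for stability, and to $w(\K_s)=\alpha^{|S|}(1-\alpha)^{N-|S|}$ for robustness; the argument never used the specific weights, matching the paper's remark that it holds ``independently of the weights.'' I expect the main obstacle to be the careful verification of the closedness-transfer step — in particular making sure that adding back items of $Y$ to $X$ does not create a subdataset where $Z$ becomes closed while $X$ was not closed in the projected one, i.e.\ that the inclusion of failure sets goes in the right direction and is not merely an equality on a subclass; getting the quantifiers over preimages and over subdatasets in the correct order is where a naive argument could slip.
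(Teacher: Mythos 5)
Your overall strategy --- reducing the claim to an inclusion between families of subdatasets indexed by $S\subseteq T$, and observing that the summation argument is independent of the particular nonnegative weights --- is the same as the paper's. However, your key lemma points in the wrong direction, and in that direction it is false. For a preimage $Z$ of $X=\psi(Z)$, the required inequality $\M(Z)\leq\M_\psi(X)$ needs the inclusion $\set{S\subseteq T \mid Z \text{ closed in } \K_s}\subseteq\set{S \subseteq T\mid X \text{ closed in } \psi(\K_s)}$, i.e., the transfer ``$Z$ closed in the full subdataset $\Rightarrow$ $\psi(Z)$ closed in the projected subdataset''; this is exactly what the paper proves, via the order-reversing Galois maps together with contractivity and monotonicity of $\psi$. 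You instead propose to prove the converse, ``$X$ closed in $\psi(\K_s)$ $\Rightarrow$ $Z$ closed in $\K_s$'' (your $B_Z\subseteq B_X$), and that implication fails already on the dataset of Figure~\ref{tbl:stability-context}: let $\psi$ remove $i_1$, take $X=\set{i_3}$, the preimage $Z=\set{i_1,i_3}$, and $S=\set{t_2,t_3}$. Then $X$ is closed in the projected subdataset (its closure there is $\set{i_2,i_3}\cap\set{i_3,i_4}=\set{i_3}$), while $Z$ is not closed in $(S,\A,R)$ because no transaction of $S$ contains $i_1$, so its closure strictly exceeds $Z$. Your supporting claim that $t(Z)$ and $t_\psi(X)$ agree after restriction to $S$ breaks for the same reason: adding back removed items can strictly shrink the extent and destroy closedness.

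Moreover, even if your inclusion $B_Z\subseteq B_X$ were true, the bookkeeping would go the wrong way: passing to complements gives $\set{S\mid Z\text{ closed in }\K_s}\supseteq\set{S\mid X\text{ closed in }\psi(\K_s)}$, hence $\M(Z)\geq\M_\psi(X)$, which is the opposite of projection-antimonotonicity and contradicts the inequality $\M(Z)\leq\M_\psi(X)$ you assert in your final paragraph --- so the proposal is internally inconsistent at precisely the point you flagged as delicate. The repair is to prove the forward transfer: assume $Z$ is closed in $\K_s$ (hence in $\K$), compare the closure $Y'$ of $\psi(Z)$ in the subdataset with $Z$ (it satisfies $Y'\sqsubseteq Z$ because $t(\psi(Z))\supseteq t(Z)$ and the closure of $Z$ in $\K_s$ is $Z$ itself), and then use monotonicity of $\psi$ and the fact that $Y'$ is a $\psi$-image to conclude $Y'\sqsubseteq\psi(Z)$, while $Y'\sqsupseteq\psi(Z)$ holds since $Y'$ is a closure of $\psi(Z)$; hence $Y'=\psi(Z)$. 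With that lemma in place, your weight-summation step is correct as written and indeed weight-independent, covering both stability and robustness.
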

\begin{proof}
  Here we want to show that for any projection $\psi$ if a pattern $X$ is closed in a subdataset $\K_s$ then $\psi(X)$ is closed in $\psi(\K_s)$, where $\K_s=(S,\A,R)$ is a subdataset of $\K=(T,\A,R)$ with $S \subseteq T$. We note that if $X$ is closed in $\K_s$ it is also closed in $\K$. And since $\psi(X)$ closed in $\psi(\K)$, then for projection $\psi_C$ from Section~\ref{sect:closed-patterns} we have $d(\psi_C(t(X)))=\psi(X)$. Hence, we can work with images of $\psi$ on closed patterns in order to find the corresponding images of $\psi_C$.

  Let $Y=d(t(\psi(X)) \cap S)$ be a closure of $\psi(X)$ in $\K_s$. Since $\psi(X) \sqsubseteq X$, then $t(\psi(X)) \supseteq t(X)$. Hence $S \cap t(\psi(X)) \supseteq S \cap t(X)$. Then $Y=d(S \cap t(\psi(X))) \sqsubseteq d(S \cap t(X))=X$, since $Y$ is the closure of $\psi(X)$ in $\psi(\K_s)$ and $X$ is the closure of $X$ in $\K_s$. Thus, we have $Y \sqsubseteq X$ and $c_t(Y) \sqsubseteq c_t(X)=X$. Because of monotonicity of projections one has $\psi(c_t(Y)) \sqsubseteq \psi(X)$ and hence $Y \sqsubseteq \psi(X)$.

  Since $Y$ is the closure of $\psi(X)$ in $\K_s$, then $Y \sqsupseteq \psi(X)$. Hence $Y=\psi(X)$.
\end{proof}

\subsubsection*{Estimates of Stability and Robustness}

For both stability and robustness it is shown that the corresponding constraint is NP-hard~\cite{KuznetsovStability2007,Tatti2014}. Thus, for efficient mining, estimates of stability and robustness are essential. Here we introduce a fast computable estimate of robustness in the same way we did it for stability in~\cite{BuzmakovICFCA2014}.

Let us consider closed itemsets $X$ and $Y$ such that $X \subset Y$. \textit{Can we define the subdatasets where $X$ is not  closed?}
Let us define $\Delta(X,Y)$ as the cardinality of the set of transactions described by $X$ but not by $Y$: $\Delta(X,Y)=t(X) \setminus t(Y)$. This set is not empty since $X \neq Y$ and they are closed. It is clear that $X$ is not closed in any subdataset that removes all transactions from $\Delta(X,Y)$, since $Y$ is a larger itemset with the same support. Then, $\stab(X) \leq 1 - 2^{-\Delta(X,Y)}$ and $\rbst(X) \leq 1 - (1-\alpha)^{\Delta(X,Y)}$ for any closed itemset $Y \supset X$. In particular, we can put $Y$ to the closest closed superitemset of $X$.

In the same way we can take all immediate closed superitemsets of $X$ and take into account all the subdatasets where $X$ is not closed. Since some of the subdatasets are probably counted several times we get the lower bound, i.e., $\stab(X) \geq 1 - \underset{Y\prec X}{\sum}2^{-\Delta(X,Y)}$ and $\rbst(X) \geq 1 - \underset{Y\prec X}{\sum}(1-\alpha)^{\Delta(X,Y)}$.

\begin{proposition}
  Stability and robustness are bounded as follows, where $X\prec Y$ means that $X$ is an immediate closed subitemset of $Y$:

  \vspace*{-3mm}
  {\scriptsize
    \begin{align}
      \label{eq:1}
      1 - \underset{Y\prec X}{\sum}2^{-\Delta(X,Y)} &\leq \stab(X) \leq &1 - 2^{-\Delta(X,Y)} \\
      1 - \underset{Y\prec X}{\sum}(1-\alpha)^{\Delta(X,Y)} &\leq \rbst(X) \leq &1 - (1-\alpha)^{\Delta(X,Y)}
    \end{align}
  }
\end{proposition}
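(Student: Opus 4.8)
The plan is to reinterpret $\stab(X)$ and $\rbst(X)$ as probabilities and then bound from both sides the probability of the complementary event ``$X$ is no longer closed'', using a single term on one side and the union bound on the other. To start, observe that for both weighting schemes the weights of all subdatasets of $\K$ sum to $1$: $\sum_{S\subseteq T}2^{-|T|}=1$ and $\sum_{S\subseteq T}\alpha^{|S|}(1-\alpha)^{|T|-|S|}=(\alpha+(1-\alpha))^{|T|}=1$. Hence $\stab(X)$ and $\rbst(X)$ are exactly the probabilities that $X$ stays closed in the random subdataset $\K_s=(S,\A,R)$ obtained by retaining each transaction independently with probability $\tfrac12$, resp.\ $\alpha$; so $1-\stab(X)$ and $1-\rbst(X)$ are the probabilities that $X$ is \emph{not} closed in $\K_s$.

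Next I would describe that complementary event combinatorially. Since $X$ is closed in $\K$, its closure inside $\K_s$ is $d(t(X)\cap S)$, which always contains $X$; thus $X$ fails to be closed in $\K_s$ exactly when $d(t(X)\cap S)\supsetneq X$. Writing $Z:=d(t(X)\cap S)$, this is equivalent to the existence of a closed itemset $Z\supsetneq X$ of $\K$ with $t(X)\cap S\subseteq t(Z)$, that is, with $S\cap\bigl(t(X)\setminus t(Z)\bigr)=\emptyset$ (the converse implication is immediate, since $t(X)\cap S\subseteq t(Z)$ forces $d(t(X)\cap S)\supseteq Z\supsetneq X$). I would then reduce the relevant $Z$'s to the \emph{immediate} closed superitemsets of $X$: any closed $Z\supsetneq X$ lies above an atom $Y$ of the interval $[X,Z]$ in the finite lattice of closed itemsets, i.e.\ there is $Y$ with $X\prec Y$ and $Y\subseteq Z$, and then $t(X)\setminus t(Y)\subseteq t(X)\setminus t(Z)$, so $\{S:S\cap(t(X)\setminus t(Z))=\emptyset\}\subseteq A_Y$ where $A_Y:=\{S:S\cap(t(X)\setminus t(Y))=\emptyset\}$. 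Consequently ``$X$ not closed in $\K_s$'' is the event $\bigcup_{X\prec Y}A_Y$.

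Finally I would evaluate $\Pr[A_Y]$ and assemble the four inequalities. The event $A_Y$ occurs iff none of the $\Delta(X,Y)=|t(X)\setminus t(Y)|$ transactions of $t(X)\setminus t(Y)$ is retained (this set is nonempty because $X\subsetneq Y$ are both closed), so by independence $\Pr[A_Y]=2^{-\Delta(X,Y)}$ in the stability model and $\Pr[A_Y]=(1-\alpha)^{\Delta(X,Y)}$ in the robustness model. Since $A_Y$ is contained in the non-closedness event for every closed superitemset $Y$ of $X$ (in particular every immediate one), $1-\stab(X)\ge 2^{-\Delta(X,Y)}$ and $1-\rbst(X)\ge(1-\alpha)^{\Delta(X,Y)}$, which rearrange to the two right-hand inequalities. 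And the union bound applied to $\bigcup_{X\prec Y}A_Y$ gives $1-\stab(X)\le\sum_{X\prec Y}2^{-\Delta(X,Y)}$ and $1-\rbst(X)\le\sum_{X\prec Y}(1-\alpha)^{\Delta(X,Y)}$, which rearrange to the two left-hand inequalities. The main obstacle is the combinatorial step in the middle paragraph — pinning down exactly when $X$ stops being closed in a subdataset and reducing the union to immediate closed superitemsets; once that is in place, only the elementary single-term and union bounds and the independence computation are needed.
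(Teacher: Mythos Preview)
Your proof is correct and follows exactly the route the paper takes in the discussion preceding the proposition: interpret the weights as a probability distribution on subdatasets, observe that a single closed superitemset $Y$ already forces $X$ to lose closedness whenever all of $t(X)\setminus t(Y)$ is dropped (yielding the upper bounds), and then cover the full non-closedness event by the union over immediate closed superitemsets, invoking the union bound (yielding the lower bounds). Your version is more carefully written---in particular the explicit characterization $\{X\text{ not closed in }\K_s\}=\bigcup_{X\prec Y}A_Y$ via the lattice-atom argument is made precise where the paper leaves it informal---but the underlying idea is identical.
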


In particular we can see that when $\alpha = 0.5$ the estimates are exactly the same. As it is recently shown~\cite{BuzmakovICFCA2014}, the estimate of stability is quite precise for the concepts with stability close to 1. Then, when $\alpha > 0.5$ the precision of the estimate of robustness is even more precise.

These estimates can be computed in polynomial time in contrast to stability and robustness. And thus we can use one of the bounds as a proxy to stability and robustness. It can be seen that the rankings based on the upper bound of stability and robustness are exactly the same as the ranking based on $\Delta(X)=\underset{Y\prec X}{\min}\Delta(X,Y)$. Although for the lower bound of stability and robustness it is hard to show the projection anti-monotonicity, we can show it for the upper bound. In the following $\Delta(X)$ is called \textDelta-measure.

\begin{proposition}
  \textDelta-measure is an anti-monotonic measure w.r.t. any projection.
\end{proposition}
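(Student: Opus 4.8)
The plan is to establish the following explicit form of the statement: for any projection $\psi$ and any itemset $Z$ closed in $\PS$, writing $W:=\psi(Z)$ (which is closed in $\psi(\PS)$ by Proposition~\ref{prop:rbst-stab-monotonicity}), one has $\Delta_{\PS}(Z)\leq\Delta_{\psi(\PS)}(W)$. Since in the closed-itemset version of the algorithm every preimage of a closed itemset $W$ of $\psi(\PS)$ is a closed itemset $Z$ of $\PS$ with $\psi(Z)=W$, this is exactly anti-monotonicity of the \textDelta-measure w.r.t.\ $\psi$; the chain version then follows by composing consecutive projections.

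First I would record two auxiliary facts. (i) \emph{Support is $\psi$-invariant on the fixed set}: if $\psi(x)=x$ then $t_{\psi}(x)=t(x)$, where $t_\psi$ is the derivation operator of $\psi(\PS)$ — indeed $x\sqsubseteq\psi(\delta(s))$ forces $x\sqsubseteq\delta(s)$ by contractivity, and conversely $x\sqsubseteq\delta(s)$ gives $x=\psi(x)\sqsubseteq\psi(\delta(s))$ by monotonicity; consequently the support set of such an $x$ in $\psi(\PS)$ is an extent of $\PS$. (ii) \emph{$\Delta$ is witnessed by any closed superitemset}: for a closed $X$ and \emph{any} closed $V\sqsupsetneq X$, the interval $[X,V]$ in the concept lattice contains a cover $X\prec Y\sqsubseteq V$, so $t(Y)\supseteq t(V)$ and hence $\Delta(X)\leq|t(X)\setminus t(Y)|\leq|t(X)\setminus t(V)|$. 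It therefore suffices to produce one closed $\PS$-itemset $V\sqsupsetneq Z$ with $|t(Z)\setminus t(V)|\leq\Delta_{\psi(\PS)}(W)$.

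Then I would pick an immediate closed superitemset $W'\sqsupsetneq W$ in $\psi(\PS)$ realizing the minimum, so that $\Delta_{\psi(\PS)}(W)=|t(W)\setminus t(W')|$ by (i). Both $t(Z)$ and $t(W')$ are extents of $\PS$, hence so is their intersection; put $V:=d\bigl(t(Z)\cap t(W')\bigr)$, which gives $t(V)=t(Z)\cap t(W')$ and therefore $V\sqsupseteq Z$. The crucial point is that $V\neq Z$: otherwise $t(Z)\subseteq t(W')$, hence $W'\sqsubseteq c_d(W')=d(t(W'))\sqsubseteq d(t(Z))=Z$, and applying the monotone $\psi$ yields $W'=\psi(W')\sqsubseteq\psi(Z)=W$, contradicting $W\sqsubsetneq W'$. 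So $V\sqsupsetneq Z$, while $|t(Z)\setminus t(V)|=|t(Z)\setminus t(W')|\leq|t(W)\setminus t(W')|$ because $t(Z)\subseteq t(\psi(Z))=t(W)$ by contractivity of $\psi$. Combined with (ii) this finishes the proof.

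The FCA bookkeeping — intersections of extents are extents, $t(\cdot)$ of an itemset is an extent, existence of covers in the finite lattice — is routine. The one load-bearing step, and the place where care is needed, is the choice of the witness $V$: a closed itemset of $\psi(\PS)$ need not be closed in $\PS$, so one cannot take $V=W'$ directly, and intersecting $t(W')$ with $t(Z)$ repairs this; ruling out the degenerate case $V=Z$ is precisely where \emph{monotonicity} of the projection (not merely contractivity and idempotency) is used, echoing its role in Proposition~\ref{prop:rbst-stab-monotonicity}.
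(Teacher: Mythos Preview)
Your proof is correct and follows essentially the same route as the paper's: intersect the extent of the preimage with the extent of a witnessing descendant in the projected dataset, observe that the result is a proper sub-extent, and bound the set difference. The paper phrases everything directly in the extent picture (working with $E$, $E_p$, $E_c$ and $E_{cp}=E_c\cap E_p$) while you stay on the intent side and pass back and forth via $t(\cdot)$ and $d(\cdot)$; your auxiliary facts (i) and (ii) and your explicit isolation of where monotonicity of $\psi$ is needed make the argument cleaner, but the underlying idea is identical.
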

\begin{proof}
  We remind that for dealing with closed patterns the tidsets are considered as patterns as discussed in Section~\ref{sect:closed-patterns}.
  By properties of projections, if an extent (the tidset of a concept) is found in $\psi(\PS)$, it is necessarily found in $\PS$~\cite{Ganter2001}. Let us consider a tidset $E$ of a concept and a tidset of its descendant $E_c$ in $\psi(\PS)$, where a descendant concept is a concept with a smaller tidset and larger itemset. 
Let us suppose that $E_p$ is a preimage of $E$ for the projection $\psi$. Since $E_c$ and $E_p$ are extents in $\PS$, the set $E_{cp}=E_c \cap E_p$ is an extent in $\PS$ (the intersection of two closed sets is a closed set). Since $E_p$ is a preimage of $E$, then $E_p \not\subseteq E_c$ (otherwise, $E_p$ is a preimage of $E_c$ and not of $E$). Then, $E_{cp} \neq E_p$ and $E_{cp} \subseteq E_p$. Hence, $\Delta(E_{p})\leq |E_{p} \setminus E_{cp}| \leq |E \setminus E_c|$. So, given a preimage $E_p$ of $E$, $(\forall E_c < E)\Delta(E_{p}) \leq |E \setminus E_c|$, i.e., $\Delta(E_p) \leq \Delta(E)$. Thus, we can use \textDelta-measure in combination with \AlgoName{} algorithm.
\end{proof}

\begin{example}
  Consider example in Figure~\ref{tbl:stability-context}. \textDelta-measure for every closed itemset is shown in Table~\ref{tbl:measures-values}.
  Let us consider the highlighted itemset $X=\set{i_3}$ with support equal to 4.
  The closest superitemsets of $X$ are $\set{i_1,i_3}$, $\set{i_2,i_3}$, $\set{i_3,i_4}$, and $\set{i_3,i_5}$, all having support equal to one.
  Thus, \textDelta-measure of $X$ is equal to $\Delta(X)=4-1=3$. It can be noticed that \textDelta-measure is not an (anti-)monotonic measure.
\end{example}

\textDelta-measure is related to the work of margin-closeness of an itemset~\cite{Moerchen2011}. In this work, given a set of patterns, e.g., frequent closed patterns, the authors rank them by the minimal distance in their support to the closest superpattern divided by the support of the pattern. In our case, the minimal distance is exactly the \textDelta-measure of the pattern.

\subsection{Example of Stable Itemsets in Binary Data}\label{sect:example}

\newcommand{\g}{\cellcolor[gray]{.7}}
\begin{table}[t]
  \caption{Patterns given by their extent and their stability in the contexts corresponding to a chain of projections.}
  \label{tab:ex-algorithm-concepts}
  \centering
  \resizebox{\columnwidth}{!}{
    \begin{tabular}{l|c|ccccccc}
      \hline\hline
      \multirow{2}{*}{\#} & \multirow{2}{*}{Pattern Ext.} 
      & \multicolumn{7}{c}{\textDelta-measure} \\
                          & & $\A_0$ & $\A_1$ & $\A_2$ & $\A_3$ & $\A_4$ & $\A_5$ & $\A_6$ \\
      \hline
      1 & 12345 
      &  5  &  4   &   4  &\g 1  &\g 1  &\g 1  &\g 1  \\
      2 & 1 
      & --  &\g 1  &\g 1  &\g 1  &\g 1  &\g 1  &\g 1  \\
      3 & 2
      & --  &  --  &\g 1  &\g 1  &\g 1  &\g 1  &\g 1  \\
      4 & 1234
      & --  &  --  &  --  &   3  &   3  &   3  &   3  \\
      6 & 3
      & --  &  --  &  --  &  --  &\g 1  &\g 1  &\g 1  \\
      7 & 4
      & --  &  --  &  --  &  --  &  --  &\g 1  &\g 1  \\
      8 & 5
      & --  &  --  &  --  &  --  &  --  &  --  &\g 1  \\
      \hline\hline
    \end{tabular}
  }
  \vspace{-3mm}
\end{table}

Let us consider the example in Figure~\ref{fig:stab-example} and show how we can find all \textDelta-stable itemsets with threshold $\theta=2$. We have a binary dataset $\K=(T,\set{i_1,\cdots,i_6},R)$. Let us denote $\A_i=\set{i_1,\cdots,i_i}$. The sets $\A_i$ correspond to a chain of projections.

In Table~\ref{tab:ex-algorithm-concepts} all closed itemsets are given by the corresponding tidsets, i.e., by elements of $D_C$. For simplicity we write $1234$ instead of $\set{t_1,t_2,t_3,t_4}$. For every element \textDelta-measure is shown for every $\A_i$. A cell is shown in gray if the itemset is no more considered (the value of \textDelta{} is less than 2).

For example, in the transition from $\A_2$ to $\A_3$ the set $1234$ is discovered with $\Delta(1234)=3$, but $\Delta(12345)=5-4=1$ which is less than $\theta=2$. Thus, itemset $12345$ is discarded and highlighted gray.
The global process is as follows (for the example in Figure~\ref{fig:stab-example}). In the empty binary dataset $(T,\emptyset,R)$ the first itemset $12345$ is considered. Then, in $(T,\set{i_1},R)$ a possible preimage of $12345$ can be either $12345$ or $12345 \cap t(\set{i_1}) = 1$. The set $12345$ is \textDelta-stable ($\Delta(12345)=4$), while $1$ is not \textDelta-stable ($\Delta(1)=1$) and is discarded. Then, the process continues with $(T,\set{i_1,i_2},R)$ and $12345$ is kept while $12345 \cap t(\set{i_2})=2$ is removed for the same reason as $1$. After that, with $(T,\set{i_1,i_2,i_3},R)$ two preimages are still considered, $12345$ and $1234$. This time $\Delta(1234)=3$, while $\Delta(12345)=1$ and the set $12345$ is discarded. The process continues in the same way with $\Delta(1234)=3$ and all other possible elements are discarded.


%
\section{Experiments and Discussion}\label{sect:experiment}
\newcommand{\LCM}{\texttt{LCMv3}\xspace}
\newcommand{\Charm}{\texttt{Charm-L}\xspace}

\subsection{Comparing Computational Efficiency}

In the first experiment we show the computational efficiency of \AlgoName{} coded in C++\footnote{
  The implementation is available at \url{https://github.com/AlekseyBuzmakov/FCAPS}
}. 
We use public available big  datasets from FIMI\footnote{
  \url{http://fimi.ua.ac.be/data/}
}, LUCS~\cite{Coenen2003}, and UCI~\cite{FrankUCI2010} repositories. The experiments are carried out on an ``Intel(R) Core(TM) i7-2600 CPU @ 3.40GHz'' computer with 8Gb of memory under Ubuntu 14.04.

\begin{table}[t]
  \caption{Computational efficiency of \AlgoName algorithm.}
  \label{tbl:efficiency-results}

  \centering
  \resizebox{\columnwidth}{!}{\small
  \begin{tabular}{l|c|cc|cc|c|c}
    \hline\hline
    Dataset 
    & \multirow{2}{*}{Top-K} & \multirow{2}{*}{$\theta_{Supp.}$} & \multirow{2}{*}{$\theta_{\Delta}$} & \multirow{2}{*}{\LCM} & \multirow{2}{*}{\textDelta} & \Charm & \multirow{2}{*}{\AlgoName} \\
    &&&&&& \footnotesize($\sim$ \Charm + \textDelta) & \\
    \hline
    \multicolumn{7}{c}{FIMI} \\
    \hline
    \multirow{2}{*}{chess}
    & 3   & 1145 & 234 & 1.62   & $>100$ & $>100$ & \bf 0.03 \\
    & 928 & 277  & 98  & $>100$ & --- &$>100$ & \bf 0.13 \\
    \hline
    \multirow{2}{*}{connect}
    & 1    & 25466 & 4224 & 0.21 & 128 &  111  & \bf 0.61 \\
    & 1000 & 8822  & 2602 & 1.25 & $>100$ & $>100$& \bf 1.77 \\
    \hline
    \multirow{2}{*}{mushroom}
    &   1 & 6272 & 2256 & $<0.01$ & 0.07 & \bf 0.01 & 0.05\\
    & 722 & 216  & 193  &   0.06  & 2.12 & 0.50 & \bf 0.23\\
    \hline
    \multirow{2}{*}{pumsb}
    & 1 & 33128 & 2035 & 0.15 & $>300$ & 36.7 & \bf 0.8 \\
    & 984 & 8793 & 865 & $>300$ & --- & $>300$ & \bf 38.7 \\
    \hline
    \multirow{2}{*}{pumsb*}
    & 1   & 30787 & 8090 & 0.04 & 1.42 & 0.16 & \bf 0.65 \\
    & 997 & 2808  & 834  & 4.47 & $>300$ & $>300$ & \bf 27.8 \\
    \hline

    \multicolumn{7}{c}{LUCS} \\
    \hline
    \multirow{2}{*}{adult}
    & 1   &34338&6939 & 0.01 & 0.78 & 0.05 & \bf 0.20 \\
    & 998 & 674 & 446 & 0.11 & 16.45 & 2.15 & \bf 1.27 \\
    \hline
    \multirow{2}{*}{waveform}
    & 1   &3424 &1179 & $\bf <0.01$ & \bf 0.01 & \bf 0.01 & 0.03 \\
    & 984 & 401 & 141 & 0.09 & 4.42 & 1.24 & \bf 0.25 \\
    \hline

    \multicolumn{7}{c}{UCI} \\
    \hline
    \multirow{2}{*}{plants}
    & 1   & 11676& 6154  & $<0.01$   & 0.11 & 0.02 & \bf 0.11 \\
    & 984 & 649  & 148  & $>100$ & --- & $>100$ & \bf 0.96 \\

    \hline\hline
  \end{tabular}
  }
\end{table}

We should note  two points here. First, to the best of our knowledge \AlgoName{} is the first algorithm that computes top \textDelta-stable and robust itemsets, so there are no direct competitors. Moreover, computing  \textDelta-measure for an itemset requires either a known partial order of itemsets or a search for its descendants (closed supersets). Thus, as an approximate competitors we decided to use two algorithms \LCM~\cite{Uno2005} and \Charm~\cite{Zaki2005}. The first one is one of the most efficient algorithm for itemset mining that should be followed by \textDelta-measure computation for every concept. \Charm is less efficient than \LCM, but allows one to find the partial order of itemsets necessary for the fast computation of \textDelta-measure.

Second, the current implementation of \AlgoName{} does not use most of the modern optimization techniques, e.g., like in \LCM~\cite{Uno2005}. The current implementation relies only on the so-called conditional database, i.e., where for every tidset $X$ the items that belong to all transactions from $t(X)$ and the items that belong to neither transactions from $t(X)$ are recorded~\cite{Uno2005}. But nevertheless, the computation with the current implementation is efficient.

The experiment is organized as following. First, \AlgoName{} finds around the 1000 most \textDelta-stable itemsets and the maximal support threshold ensuring to find all these the most \textDelta-stable itemsets. Among them we find the most \textDelta-stable itemset (or itemsets if they have the same value of \textDelta-measure) and the corresponding support threshold. 
So \LCM and \Charm are additionally provided with an oracle returning the required support thresholds.
For these two thresholds we run \LCM and \Charm algorithm and register the computation time. In addition for \LCM we register also the time needed for computing \textDelta-measure, while for \Charm this time is insignificant.
In Table~\ref{tbl:efficiency-results} for every dataset we give the results corresponding to every threshold, and the corresponding thresholds for support and \textDelta-measure. For example, for dataset \texttt{chess} we run two experiments. In the first one we search for top-3 \textDelta-stable itemsets having the same value (234) for \textDelta-measure. The less frequent itemsets among these three has support equal to 1145, thus, \LCM and \Charm should be run with this support threshold in order to enumerate all of these itemsets. \LCM finds the corresponding frequent closed itemset in 1.67 seconds, then it takes more than 100 seconds for computing \textDelta-measure. \Charm takes more than 100 seconds and \AlgoName requires only 0.03 seconds. In the second experiment for dataset \texttt{chess} we search for top-928 \textDelta-stable itemsets, all of them have support at least 277 and \textDelta-measure 98.

We boldify the computation time for an algorithm in Table~\ref{tbl:efficiency-results}, if it is better than the time of the competitors. We can see that even \LCM alone does not always beat \AlgoName, while the additional time for \LCM for computing \textDelta-measure is always significant.
There are only two cases when \AlgoName is slightly worse (FIMI-mushroom and LUCS-waveform). For both cases the most stable itemset has a very high support and only a couple of itemsets are frequent enough in both datasets. In contrast, if the frequency of the most \textDelta-stable itemsets is not high, then \AlgoName is many times faster than even \LCM alone.

In these experiments we do not provide the found itemsets since the main focus of our paper is efficiency. However, we highlight that \textDelta-stable patterns are not trivial and can be found deep in the lattice of patterns \cite{Metivier2015}.

\subsection{Scalability}

\begin{table}[t]
  \caption{Scalability of \AlgoName w.r.t. the number of stored patterns}
  \label{tbl:L-scalability}
  \centering
  \scalebox{0.8}{
    \begin{tabular}{l|cccc}
      \hline\hline
      Dataset
      & L=100 & L=1000 & $L=10^4$ & $L=10^5$ \\
      \hline
      \multicolumn{5}{c}{FIMI} \\
      \hline
      chess
      & 0.04 & 0.13 & 1.35 & 14.7 \\
      connect
      & 0.70 & 1.77 & 12.7 & 131 \\
      mushroom
      & 0.1 & 0.29 & 2.62 & 40.5 \\
      pumsb
      & 7.15 & 71.5 &  904   & --- \\
      pumsb*
      & 4.14 & 45.7 &  832   & --- \\
      \hline
      \multicolumn{5}{c}{LUCS} \\
      \hline
      adult
      & 0.30 & 0.99 & 8.79 & 83.97 \\
      waveform
      & 0.06 & 0.18 & 1.97 & 22.13 \\
      \hline
      \multicolumn{5}{c}{UCI} \\
      \hline
      plants 
      & 0.22 & 1.09 & 11.58 & 117.91 \\ 
      \hline\hline
    \end{tabular}
  }
\end{table}

We can study scalability of \AlgoName from different points of view. First, we can measure the time necessary for finding top-$L$ concepts, i.e., how the memory limitation $L$ changes the efficiency. It is shown in Table~\ref{tbl:L-scalability} for the same datasets. We can see that the computation time changes linearly w.r.t. the memory limitation $L$ as it is expected from Eq.~(\ref{eq:complexity-theta-bin}).

\begin{figure}[t]
  \centering
  \includegraphics[width=0.8\columnwidth]{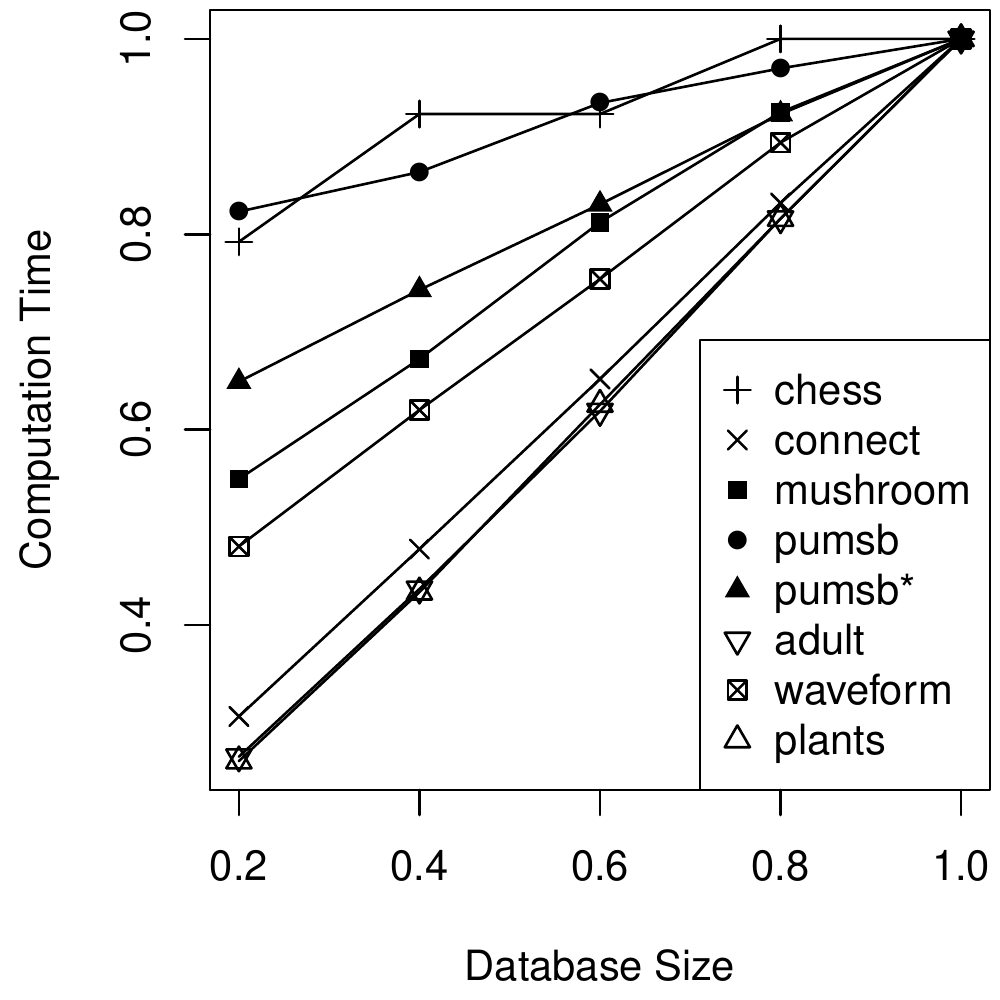}
  \caption{
    Scalability of \AlgoName w.r.t. dataset size. X-axis shows the fraction of objects taken from an original dataset, and Y-axis shows the fraction of time w.r.t. the computational time needed for processing the original dataset.
  }
  \label{fig:scalability-db-size}
  \vspace*{-5mm}
\end{figure}

Finally, we check how computation time depends on the size of the dataset. For that we run our experiments for $L=1000$, and vary the number of transactions in a dataset. We permute several time the order of transactions of the dataset. For every permutation we construct datasets containing certain amount (the size of the dataset) of the first transactions from this permutation. The computation time is averaged over the permutations. Figure~\ref{fig:scalability-db-size} shows the computation time necessary to process a certain fraction of transactions in the dataset. Time is given as a fraction of time for processing the whole dataset. We can see that computation time changes linearly w.r.t. the fraction of processed transactions.

\AlgoName allows limiting the memory in use; thus, as long as the program fits within the memory, which is controllable, the scalability of our approach is linear w.r.t. to the size of the dataset, and consequently can be applied to very huge datasets.

\section{Conclusion}
In this paper we have introduced a new class of interestingness measures, so-called projection-antimonotonic measures. This wide class of measures includes classical anti-monotonic, locally anti-monotonic, and some nonmonotonic measures. We have introduced algorithm \AlgoName, which allows one to efficiently mine patterns w.r.t. projection-antimonotonic measures. We have studied stability and robustness, two projection-antimonotonic measures, and have introduced polynomial estimates of them, called \textDelta-measure. Finally, in the experimental part we have showed that \AlgoName can find \textDelta-stable itemsets much more efficiently than postpruning approaches.

Many directions for future work are promising. First, we should work on adaptation of \AlgoName{} for dealing with different kinds of pattern structures, e.g., based on sequences or graphs~\cite{Kuznetsov2005}. Second, \AlgoName allows one to introduce new  data mining approaches by means of projections of special kind, thus, it is interesting to study possible classes of projections.
Finally, besides robustness and stability we a study of other projection-antimonotonic measures is important.


\PrintBiblio

\end{document}